\DeclareMathOperator\erfc{erfc}
\newtheorem{observation}{Observation ---}
\newtheorem{theorem}{Theorem}
\title{Cyclical Curriculum Learning}
\author{
  H. Toprak Kesgin, M. Fatih Amasyali \\
  \textit{Department of Computer Engineering} \\
  \textit{Yildiz Technical University} \\
  Istanbul, Turkey\\
  \texttt{\{tkesgin, amasyali\}@yildiz.edu.tr}
}
\begin{document}
\maketitle

\begin{abstract}
Artificial neural networks (ANN) are inspired by human learning. However, unlike human education, classical ANN does not use a curriculum. Curriculum Learning (CL) refers to the process of ANN training in which examples are used in a meaningful order. When using CL, training begins with a subset of the dataset and new samples are added throughout the training, or training begins with the entire dataset and the number of samples used is reduced. With these changes in training dataset size, better results can be obtained with curriculum, anti-curriculum, or random-curriculum methods than the vanilla method. However, a generally efficient CL method for various architectures and data sets is not found. In this paper, we propose cyclical curriculum learning (CCL), in which the data size used during training changes cyclically rather than simply increasing or decreasing. Instead of using only the vanilla method or only the curriculum method, using both methods cyclically like in CCL provides more successful results. We tested the method on 18 different data sets and 15 architectures in image and text classification tasks and obtained more successful results than no-CL and existing CL methods. We also have shown theoretically that it is less erroneous to apply CL and vanilla cyclically instead of using only CL or only vanilla method. The code of Cyclical Curriculum is available at https://github.com/CyclicalCurriculum/Cyclical-Curriculum.
\end{abstract}

\keywords{Curriculum Learning, Deep Learning, Optimization.}

\section{Introduction}
While we learn during our education, we use a specific subject order, namely a curriculum.
We begin with basic, simple information and proceed with more challenging, complex issues.
Generally, experts create this curriculum. Following a curriculum helps humans and animals learn better \cite{humancur,humancur2} . 

Artificial neural networks (ANN), a machine-learning algorithm, were inspired by the communication of biological neuron cells.
However, unlike human sequential learning, the data set in neural network training is randomly ordered in the classical method.
In ANN training, it has been demonstrated that ordering the data in a specific order rather than randomly can improve the success of ANNs \cite{bengio2009curriculum,hacohen2019power,mentornet}. 
This method is known as curriculum learning (CL).
The ordering here can be thought of as simple to difficult or basic to complex.

ANNs have many hyperparameters for training.
Learning rate, epoch number, batch size, and hidden layer number are some of them. While performing the training of ANNs, the classical method divides the whole training set according to a certain number of batches.
It updates the weights of the model with these batches to optimize the determined loss function.
One epoch is completed when weights are updated using all data according to the determined batch size.
When the weights have been updated for the specified number of epochs, the training is complete.
We will call this classical method used as vanilla method in this paper.

In curriculum approaches, the data is sorted according to a specific criterion first.
This order can be from easy to difficult, from hard to easy, or randomly.
After the ranking is determined, the data set is divided according to specific parts.
After the splitting process is performed, optimization is applied for the number of epochs chosen for the first part of the data, and the weights of the model are updated.
This subset is enlarged after a certain number of epochs, and the training continues over these merged sets.
The training is completed when the whole data set is used, and the number of updates is made for the determined epoch number.

The rank can be determined using a variety of methods.
An expert can determine the rank.
Any machine learning or ANN-generated estimated probability values could be used for ranking.
Alternatively, as in the self-taught algorithm, the model itself is trained with the vanilla method, and the loss values produced for the training samples can be used for ranking the samples.


There are studies that have achieved successful results using curriculum \cite{bengio2009curriculum, hacohen2019power, CLNLU, transferCL}, anti-curriculum \cite{NMTCL, EmpiricalNMT, DomainNMT} and random-curriculum \cite{wu2020curricula}. The common point of these studies is to achieve more successful results in training by increasing the size of the training set.

Of course, the training set size does not have to start small to go up. For example,  the training can start with the whole data set, and after a specific number of epochs, the subsets of the training set can be started to use. By doing this, more successful results could be obtained compared to the stochastic gradient descent (SGD) algorithm \cite{zhou2020curriculum}.

Randomly ordered samples can perform as well as or better than curricula and anti-curricula, and better results can be obtained as the training dataset dynamically increases and decreases over time, implying that any benefit is entirely due to the dynamic training set size \cite{wu2020curricula}. 

We compared popular CL methods on a large number of datasets and deep learning architectures in the text and image domains, and we found that current CL methods do not increase success much, in line with the results of studies \cite{wu2020curricula, hacohen2019power} recently published in the literature. Almost all of the methods in the literature increase the size of the dataset \cite{bengio2009curriculum, hacohen2019power, CLNLU}. There are few studies that reduce it \cite{zhou2020curriculum}. Our motivation is to investigate the effects of cyclical training set size rather than simply increasing or decreasing it by combining these two approaches. We propose a cyclical dataset size approach and found that this approach significantly improves success over existing methods.


This paper is divided into the following sections. Section 2 includes  previous studies about curriculum learning. In the section 3, the proposed method, the CCL algorithm, is introduced. The datasets, models and the experimental results are explained in section 4. In Section 5, the theoretical explanation for the CCL algorithm is presented and finally, the results and discussions are included in the section 6.

\section{Literature Review}
It has been shown that there can be significant improvements when artificial neural networks are trained in a meaningful order of samples \cite{bengio2009curriculum} . It has been claimed that this improvement can provide better generalization and increase the speed of non-convex optimization and find a better local minimum for convex optimization. The study was supported with artificial data sets classification and language modeling tasks.

Self-Paced Learning (SPL) \cite{kumar2010self} determines the order (easiness) of training samples during training rather than using a fixed curriculum as in previous studies. In other words, the difficulty values vary with each iteration. The selected subset size is gradually increased by changing the easy samples’ threshold value. Training continues until the entire data set is used. They have experimentally demonstrated the success of their work in four different data sets.

Another competitive model or the model itself (self-thought) can be used to rank samples for curriculum learning\cite{hacohen2019power} .  It has shown similar improvements in both methods. The scoring function for the data set and the pacing function for determining the size of the samples to be given to the network were defined. They demonstrated their success in training speed and test set performance in object recognition tasks.

The dynamic instance hardness is another method for determining dynamic difficulty (DIH) \cite{zhou2020curriculum}. DIH is concerned not only with the final loss values of the samples, but also with how this loss value changes during training. It is calculated during training, just like in SPL \cite{kumar2010self}, but it takes into account not only the final value but also the difficulty values from previous iterations. Training begins with all training samples. After a certain number of epochs, the training set size is reduced, and this set is chosen from samples with high DIH values. 


Cyclical Learning Rate is proposed for the learning rate, one of the critical hyperparameters of artificial neural networks. \cite{smith2017cyclical} In this method, instead of gradually decreasing the learning rate, it has been shown that changing the learning rate cyclically between reasonable values gives more successful results in the test set for accuracy metric. The study has suggested practical methods for determining the reasonable learning rate range. They demonstrated their work using various networks in cifar10 and cifar100.

In \cite{wu2020curricula}, Extensive experiments were conducted on curriculum learning and examined the situations in which curriculum methods work.
Curriculum, Anti-Curriculum, and Random-Curriculum methods were compared. The Random Curriculum method has performed as well as or better than other methods. In this case, it has been shown that the improvement is due to the change in the size of the data set used in training. In the study, also the success of curriculum learning in noisy data and limited-time training was examined. Their experiments have shown that curriculum methods are more successful with noisy data and limited time for training.

\section{Proposed Method}

In this section, the cyclical data set sizes, how the samples are selected for training, how the scores are determined, and the training algorithm are explained.

\subsection{Cyclical Training Dataset Sizes}

The vanilla method uses the entire training set in one epoch. CCL, on the other hand, only uses a subset of the training dataset in one epoch. The size of these subsets is determined by parameters. The samples to be chosen for the subset are determined by certain scores.

Hyperparameters of the cyclical training datasets sizes algorithm are initial percent, final percent, alpha, and epoch count. The starting percentage indicates what percentage of the data the cycle will start with, and the final percentage indicates the point it will increase. The alpha parameter determines how fast or slow the data set size changes will be between 0 and 1. The algorithm for the cycling approach that determines the percentage of the subsets is given in Algorithm 1.

\begin{algorithm}
\caption{Get Dataset Sizes}
\begin{algorithmic}[1]
\Require

\State $T$ : epoch count for training.
\State $sp$ : initial percentage of data set size.
\State $ep$ : final percentage of data set size.
\State $\alpha$ : speed of cycle.
\Statex
\Procedure {get\_dataset\_sizes}{$sp$, $ep$, $\alpha$, $T$}
\State $S \leftarrow $  [ ] (initialize empty list)
\State $n \leftarrow sp$
\State $S.append(n)$
\For {$t \in \{1...(T-1)\}$}
\If {(n == sp) or (($S_{t-1} < S_t$) and (n!= ep) )}
\State $n \leftarrow min((n * (1 / \alpha)),ep) $
\Else
\State $n \leftarrow max((n * (\alpha)),sp) $

\EndIf
\State $S.append(n)$
\EndFor

\State \Return $S$

\EndProcedure

\end{algorithmic}
\end{algorithm}

\begin{figure}[h]
\centering
\includegraphics[width=\columnwidth]{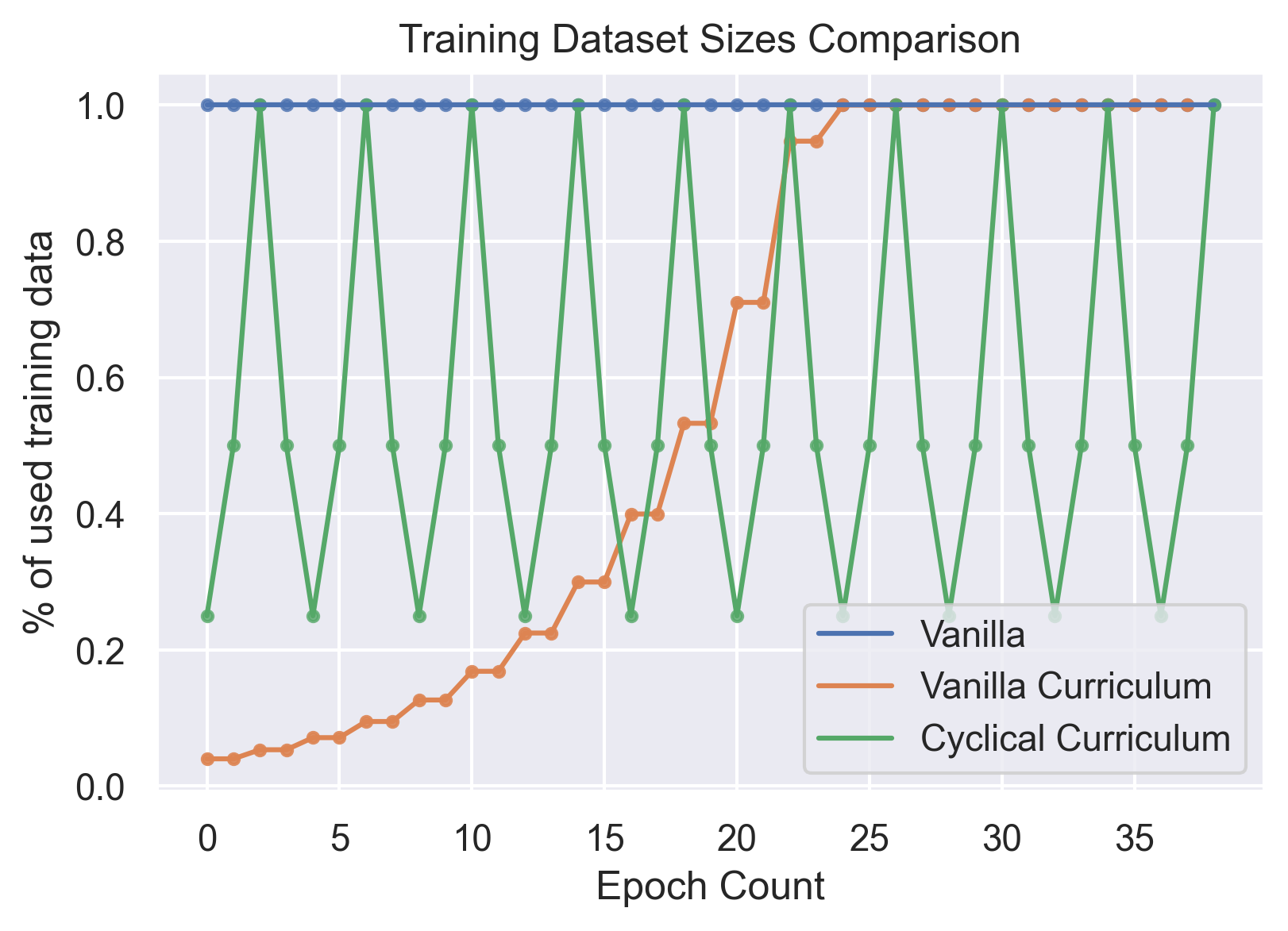}
\caption{Training Dataset Sizes Comparison}
\end{figure}

Figure-1 shows how the size of the training dataset changes throughout the training. For Cyclical Dataset Sizes, initial percent (sp) = 0.25, final percent (ep) = 1.0, alpha = 0.5 was used.

\subsection{Selecting Samples}

Since a different subset of the training set will be used for each epoch of CCL, the number of samples should be selected as much as the specified subset size for model training.
For this selection process, various algorithms can be used;

1) Samples for the subset can be randomly selected at each step.

2) All samples are sorted according to a specific criterion, and the top n are selected.

3) Probability values are calculated for all samples. And these probability values are used to select samples.




In this study, when determining the subsets of the data set, we used a probabilistic selection algorithm. We examined approaches to picking easy samples with higher probability, selecting difficult ones with higher probability, and randomizing them. We observed that using easy samples for the subset is more successful than other methods. And also we found that the probabilistic selection works better than greedy selecting the top n samples. Therefore, in this study's experiments, we used the third method, the probabilistic selection algorithm, with the sampling below. 

$ p_i \sim r(i) $ where $r(i)$ is probability scores. Training data indices are sampled with a probability proportional to the inverse loss values. Algorithm-1 determines the percentage of indices to be selected per epoch. Sampling is done without replacement.


\subsection{Determining Scores}
In order to make a probabilistic selection, all data must have a score value.
We obtained these scores by using the model itself (self-thought). First, the model is trained for a certain number of epochs. Then the model makes predictions for the training set. From these estimates, the loss value is calculated for each sample. In order to obtain probability values from these loss values, the following operations are applied. Since the small loss value represents the easy example, the calculated loss values are reversed according to the multiplication. All scores are then divided by the total number of scores for normalization.
Formally, the score(probability) values for each sample can be calculated as $k_i = \frac{1}{l(y_i,M_1(X_i,W)))} $ and                        
$r_i = \frac{k_i}{\sum k_i} $ 
where $l$ is loss function, $ M_1 $ is trained model, $W$ is final model parameters for $ M_1 $ and $r$ is score values for each sample. Then, the samples are selected with the probabilistic selection algorithm. (See section 3.2)

\subsection{Training}
After the training set sizes and scores are determined, the training is performed according to Algorithm-2.






\begin{algorithm}
\caption{Training}
\begin{algorithmic}[1]
\Require
\State $D$ : Training dataset consists of X: Input and y: Output.
\State $M_1$ : the model by which the scores will be obtained.
\State $M_2$ : the model to train.
\State $T$ : epoch count for training.
\State $sp$ : initial percentage of data set size.
\State $ep$ : end percentage of data set size.
\State $\alpha$ : speed of cycle.
\Statex

\Procedure {train}{}
\State \Call{train\_model}{$M_1$,$D$} (Train the scoring model with entire training data.)
\State $S \leftarrow \Call{get\_sizes}{ep,sp,\alpha,T} $ (Section 3.1)
\State $r \leftarrow \Call{get\_scores}{M_1,D} $ (Section 3.3)
\State $M_2 \leftarrow  \Call{initilize\_model}{} $ (Initilize the model with random weights)
\For {$t \in \{1...T\}$}
\State $n \leftarrow \Call{select\_by\_scores}{D,S[t],r} $ (Section 3.2)
\State \Call{update\_model}{$M_2$,$D(n)$}

\EndFor

\EndProcedure
\end{algorithmic}
\end{algorithm}



The $get\_sizes$ procedure is as described in section 3.1. It returns a list (S) such as $[0.25, 0.5, 1.0, 0.5, 0.25, 0.5, 1.0]$ . When $S[i]$ = 1, since selection is done without replacement, the entire data set is used in step i. This means applying the vanilla method in step i. When $S[i]<1$, a smaller part of the dataset is used and since the selection is made according to the losses of samples, this is to apply the CL algorithm in the step i. Therefore, CCL applies vanilla and CL cyclically.

The CCL algorithm can be easily converted to other CL algorithms. For example, if the S array produced by the $get\_sizes$ function consists of only 1s ($S[i] = 1$) and the $get\_scores$ function produces equal scores for all samples, it becomes a vanilla method. If the array produced by the $get\_sizes$ function is an incremental array and equal scores are used for all instances, the method becomes Random CL. In CCL, on the other hand, the S array produced by the $get\_sizes$ function consists of cyclically changing values and the scores of all samples are determined by a certain model with the $get\_scores$ function.

\section{Experiments}

We ran our experiments with various data sets in order to perform various tasks over different networks.
Experiments can be divided into two main groups, as image classification and text classification.
The datasets used for image classification are cifar-10\cite{krizhevsky2009learning}, cifar-100\cite{krizhevsky2009learning}, fashion mnist\cite{xiao2017fashion}, and stl-10\cite{coates2011analysis}.
Data sets differ from each other in several ways.
Cifar-100 has 100 classes, whereas other datasets have 10 classes.
Cifar10 and cifar100 contain 32x32, stl-10 96x96 color images, while for fashion mnist each example is a 28x28 grayscale image. The cifar10, cifar100, fashion and stl10 datasets contain 50000, 50000, 60000 and 5000 training samples, respectively. And the test sample counts for cifar10, cifar100, fashion and stl10 are 10000, 10000, 10000 and 8000 respectively.


These image datasets are modeled using a basic CNN network. 
The CNN network used can be described as follows.
The convolutional layer is followed by max-pooling twice; then, after flattening, another convolutional layer is applied and transferred to the dense layer.
Following the softmax activation, predictions are generated.
Relu activation function is used in the intermediate layers.
Batch normalization was applied before activation functions.
The image sizes and number of classes in the dataset are used to adjust the input and output layers.

For natural language processing experiments, data set and model pairs were obtained from various sources on the internet \cite{20newsmodel,sarcasmmodel,reutersmodel,hotelmodel,stweetmodel,ctweetmodel,qpairmodel,foodmodel,sofmodel,toxicmodel,redditmodel,squadmodel,imdbmodel,nermodel} .
The datasets used are 20news, sarcasm, reuters, hotel, stweet, ctweet, qpair, food, sof, toxic, reddit, squad, imdb, ner \cite{20news,sarcasm,book,reuters,hotel,stweet,ctweet,qpair,food,sof,toxic,reddit,squad,imdb,ner}. 
Task and model diversity were prioritized when selecting data sets.
News classification, sentiment analysis, sarcasm detection, named-entity recognition are examples of selected tasks.
Various deep learning models were used for the datasets.
These networks were used to test the vanilla, other curriculum methods, and the CCL.

\subsection{Training Process}

We first trained the model until no improvement until two consecutive epochs in the validation set (early stopping) for the required number of epochs.

We generated the scores in this epoch number using the model itself. While conducting the experiments, we trained the models with three times the number of epochs from which we obtained the scores. In the image datasets, we used 128 as the batch size, except for stl-10. We used 32 batch size for stl-10. We used Adam (Adaptive Moment Estimation) \cite{adam} as the optimizer in all image experiments. For text classification tasks, we used the hyperparameters in the sources from which the dataset model pairs were taken.

We determined our evaluation criteria as top-1 accuracy and tested the models in linear intervals during the training. We determined the success criteria as the maximum accuracy achieved during these tests. We started the models from the same starting point for a fair evaluation and repeated the experiments five times with different starting points.





\subsection{Summary of Results}

\begin{table*}[]
\begin{tabular}{llllllllll}
\toprule
         & Vanilla & CCL                          & Anti-CCL                     & Rand-CCL                     & CL                           & Anti-CL                      & Rand-CL                      & SPL                          & Anti-SPL                     
         \\ \midrule
cifar10  & 72.09   & {\color[HTML]{009901} 75.51} & {\color[HTML]{CB0000} 69.83} & {\color[HTML]{CB0000} 70.01} & 72.98                        & {\color[HTML]{CB0000} 66.46} & {\color[HTML]{CB0000} 69.26} & 72.34                        & {\color[HTML]{FF0000} 72.09} \\
cifar100 & 38.77   & {\color[HTML]{009901} 42.45} & {\color[HTML]{CB0000} 35.40} & {\color[HTML]{CB0000} 36.54} & {\color[HTML]{CB0000} 37.05} & {\color[HTML]{CB0000} 31.70} & {\color[HTML]{CB0000} 33.16} & 39.11                        & 38.13                        \\
fmnist   & 90.74   & {\color[HTML]{009901} 91.32} & 90.66                        & 90.64                        & {\color[HTML]{009901} 91.10} & {\color[HTML]{CB0000} 89.83} & {\color[HTML]{CB0000} 90.23} & 90.95                        & 90.96                        \\
stl\_10  & 59.53   & {\color[HTML]{CB0000} 56.94} & {\color[HTML]{CB0000} 56.78} & {\color[HTML]{CB0000} 55.58} & 59.06                        & {\color[HTML]{CB0000} 58.49} & {\color[HTML]{CB0000} 57.90} & 59.61                        & 59.42                        \\
20\_news & 71.65   & {\color[HTML]{009901} 72.43} & {\color[HTML]{CB0000} 69.34} & {\color[HTML]{CB0000} 69.61} & 72.36                        & {\color[HTML]{CB0000} 68.80} & 71.05                        & 71.88                        & 71.23                        \\
sarcasm  & 82.23   & {\color[HTML]{009901} 82.65} & {\color[HTML]{CB0000} 81.33} & {\color[HTML]{CB0000} 81.36} & 81.74                        & 82.10                        & 82.18                        & {\color[HTML]{FF0000} 81.42} & {\color[HTML]{FF0000} 81.77} \\
reuters  & 79.82   & {\color[HTML]{009901} 80.30} & 79.60                        & 79.68                        & {\color[HTML]{009901} 80.63} & {\color[HTML]{CB0000} 79.26} & {\color[HTML]{CB0000} 79.46} & 79.73                        & 79.74                        \\
hotel    & 60.51   & 60.42                        & 59.97                        & {\color[HTML]{CB0000} 59.77} & 60.67                        & {\color[HTML]{CB0000} 59.35} & 60.22                        & 59.07                        & 58.91                        \\
stweet   & 78.02   & 78.03                        & {\color[HTML]{CB0000} 77.82} & 78.03                        & 78.03                        & 77.96                        & 77.96                        & {\color[HTML]{FF0000} 77.79} & {\color[HTML]{FF0000} 76.39} \\
ctweet   & 85.88   & 85.73                        & 85.71                        & 85.49                        & 85.46                        & {\color[HTML]{CB0000} 83.18} & {\color[HTML]{CB0000} 84.49} & 85.66                        & 85.69                        \\
qpair    & 78.44   & 78.28                        & 78.52                        & 78.42                        & 78.07                        & 78.39                        & 78.21                        & {\color[HTML]{FF0000} 77.51} & 77.97                        \\
food     & 94.04   & {\color[HTML]{009901} 94.20} & {\color[HTML]{CB0000} 93.92} & {\color[HTML]{CB0000} 93.99} & {\color[HTML]{009901} 94.18} & {\color[HTML]{CB0000} 93.81} & {\color[HTML]{CB0000} 93.96} & 94.03                        & {\color[HTML]{FF0000} 93.95} \\
sof      & 88.13   & {\color[HTML]{009901} 88.31} & 87.93                        & 88.32                        & 88.16                        & {\color[HTML]{CB0000} 87.10} & {\color[HTML]{CB0000} 87.22} & 88.16                        & {\color[HTML]{FF0000} 87.41} \\
toxic    & 92.75   & {\color[HTML]{009901} 93.31} & 92.71                        & 92.76                        & {\color[HTML]{009901} 93.35} & 92.74                        & 93.29                        & 92.44                        & 92.96                        \\
reddit   & 72.82   & 72.81                        & 72.81                        & 72.78                        & {\color[HTML]{CB0000} 72.64} & 72.80                        & 72.77                        & {\color[HTML]{FF0000} 72.56} & {\color[HTML]{FF0000} 72.41} \\
squad    & 78.11   & 78.26                        & 78.18                        & 77.86                        & 78.66                        & 77.70                        & 78.66                        & 78.72                        & 78.65                        \\
imdb     & 92.11   & 92.24                        & 91.27                        & 92.14                        & {\color[HTML]{CB0000} 91.16} & 91.65                        & 92.10                        & 92.34                        & 92.24                        \\
ner      & 73.40   & {\color[HTML]{009901} 74.25} & 72.96                        & 72.93                        & {\color[HTML]{CB0000} 72.77} & {\color[HTML]{CB0000} 72.82} & 73.38                        & {\color[HTML]{009901} 73.84} & {\color[HTML]{FF0000} 72.37}
\\ \bottomrule
\end{tabular}
\centering
\caption{Average Maximum Accuracy Table \\
In the experiments, the models were started from the same weights. And it has been repeated five times. The highest accuracy scores achieved were averaged. With comparison to vanilla, statistically better ones are shown in green, worse ones are shown in red.}
\centering
\end{table*}

Table-1 displays the average top-1 accuracy across test sets.
The accuracy of the NER (Named Entity Recognition) has calculated without the 'O' tag because the vast majority of tags (\%85) are the 'O' tag among 17 tags.
The Squad dataset is for a question answering task rather than a classification task. 
Therefore, predicted and ground truth answer's exact match percentage are used as success criteria.

In these experiments, we used the same initial weights for all methods. And we repeated experiments five times with different starting points. The average of these 5 values are shown in Table-1. For comparison of methods, we used student's t-test with p = 0.05. We compare all methods with the vanilla method. Statistically better ones are shown in green, worse ones are shown in red. CCL outperforms vanilla in ten datasets out of eighteen. And in only one dataset, its performance is lower. We used the same cycle rate ($sp=0.25$, $ep=1$, $\alpha=0.5$) for all data sets. However, it can be possible to achieve more successful results with the optimization of cycling hyperparameters.

Since a subset of the data set is used in each epoch, different methods are used to select samples for this subset. The cyclical curriculum was selected based on the probability values of the score values. These score values are composed of the loss values from a model trained with a different starting point on the training set. Obtained loss values are reversed (the lower loss, the higher probability) and normalized as a probability value. Cyclical Anti-curriculum means the opposite. Cyclical Random-curriculum refers to random selection according to subset size.  The Curriculum is not the probabilistic selection of subsets, but data set growth using fixed rank according to scores. For example, let the number of samples in the whole data set is 3n. The easiest n samples are used for a certain number of epochs. After that, the easiest 2n samples are used for a given epoch and finally, the entire data set is used. With these subsets it is also important that keeping the sample balanced with the same number of examples from each class as in the training set. The comparison of these methods during the training of cifar 10 datasets is given in Fig.~\ref{fig:comp}. The method that selected more easy samples proportionally to their probabilistic values, CCL, yielded the best results from these methods.

\subsection{Investigating Training Process}

\begin{figure}[H]
\includegraphics[width=\columnwidth]{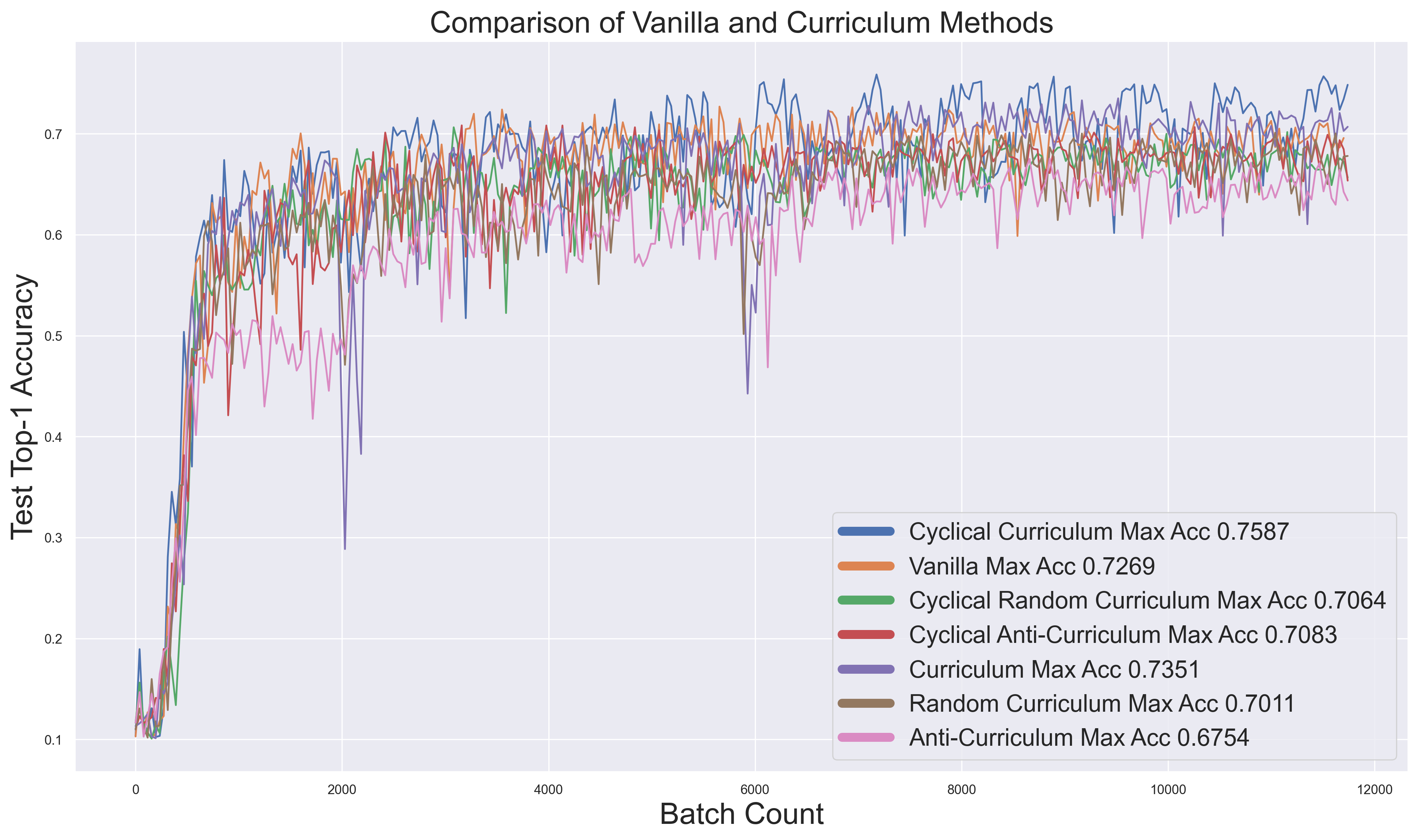}
\caption{Comparison of vanilla, cyclical curriculum and vanilla curriculum methods for $\alpha = 0.5$, initial percent = 0.25, final percent = 1 parameters in cifar 10 data set.}
\label{fig:comp}
\end{figure}

In the experiments, we obtained better results by using the CCL compared to the vanilla method. It can be observed in Figure-2 that CCL achieves a much better minimum point than the vanilla method and other curricula methods. In cyclic curriculum variants trials, we observed that using fixed scores at the end of each epoch gives better results than using dynamic scores as in the SPL \cite{kumar2010self} method. We have seen that the change of training data set size with time gives better results than it is constant ($\alpha = 1$ and $sp! = 1$). However, the rate of change of data set size remains a hyperparameter that needs to be optimized.

\begin{figure}[h]
\includegraphics[width=\columnwidth]{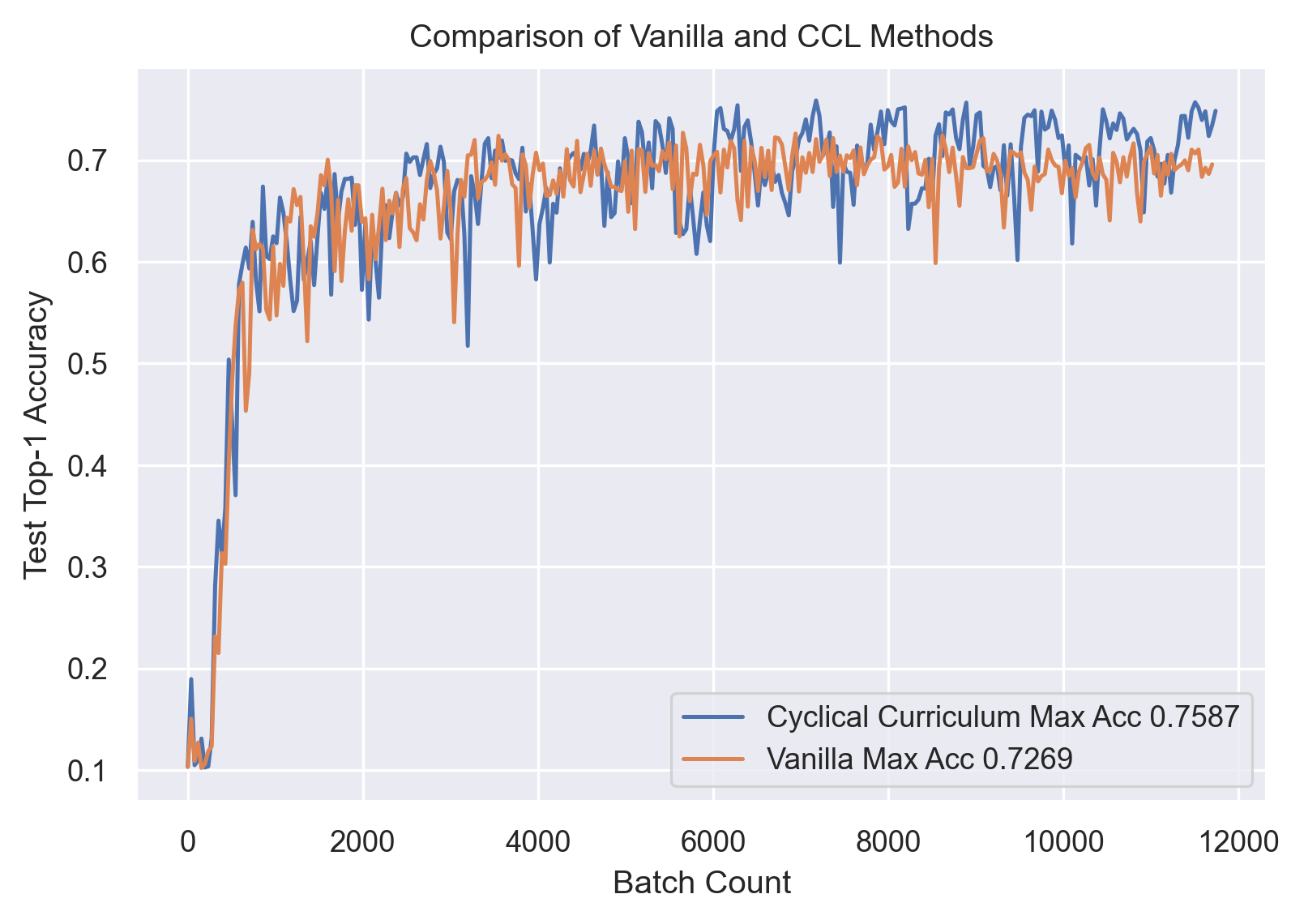}
\caption{Comparison of vanilla and cyclical curriculum methods for $\alpha = 0.5$, initial percent = 0.25, final percent = 1 parameters in cifar 10 data set.}
\end{figure}

The CCL achieves a higher success rate than the vanilla method.  we noticed that there is also a cyclic in the success change in the test dataset. For CCL, the success rate is making lots of ups and downs and oscillating in a wider range. This allows it to achieve more successful results. Figure 3 is an example of this process. This "accuracy cycling" is similar to the \cite{smith2017cyclical} cycle in a study that suggests changing the learning rate to cycles. The theoretical explanation as to why the CCL method is successful will be given in the next section.

\section{Why does Cyclical Curriculum Work?}
Machine learning algorithms, specifically deep learning models, try to minimize a function throughout training. Let the F function be minimized for the D data set ($D = \{x_i, y_i\}_{i=1}^{N} $). Let D dataset consist of N samples and $x_i$ represent the $i^{th}$ training example and $y_i$ its label. The function $F$ to be minimized can be written as follows. 
\begin{flalign}
argmin(F_w) \nonumber
\end{flalign}
\begin{flalign}
F(w) = \frac{1}{N} \sum_{i = 1}^{N} f_i(w) 
\end{flalign}
\begin{flalign}
f_i = l ( y_i, M(x_i,w) ) \nonumber
\end{flalign}
$M(x_i,w)$ is the prediction for $x_i$ of a model with w parameters, $y_i$ is the actual label of the sample i and $l$ is the loss function.

This optimization process can be performed with the gradient descent (GD) algorithm $w_{t+1} = w_t - \gamma \nabla_w F(w_t)$ where w is all parameters of the model (weights of deep learning model) $\nabla_w$ is the derivative of loss function with w parameters and $\gamma$ is learning rate. However, the GD algorithm is not suitable for large-scale problems. Instead, in practice, the optimization is performed using the stochastic gradient descent (SGD) algorithm with the equation $w_{t+1} = w_t - \gamma \nabla_w f_i(w_t)$. The difference of the SGD algorithm from the GD algorithm is that it updates the weights at each step, not according to all samples in the training set, but according to a small number of randomly selected samples at each step. Since SGD does not use all training examples at each step, it makes some errors compared to GD. The error of updating with sample i at time t can be shown with $ error(w_t) = \mid \nabla_w f_i(w_t) - \nabla_w F(w_t) \mid$ .

If the $i^{th}$ sample is selected from a uniform distribution, it represents SGD and calculated by Equation \ref{gd} where $N$ is the number of samples and $r_i$ is $i^{th}$ the sample's probability of being selected. 

\begin{equation} \label{gd}
r_i = \frac{1}{N}
\end{equation}

If the samples are selected according to the Equation 3, we name it as exponential distributed stochastic gradient descent (ESG).

\begin{equation} \label{esg}
r_i = \frac{\lambda . exp(- \lambda . f_i)}{ \sum_{n=1}^{N} \lambda . exp(- \lambda . f_i)}
\end{equation}

Selecting $r_i$ using ESG makes an error with respect to GD, where all examples are used instead of the $i^{th}$ example.
The mean square error made can be described by Equation \ref{err}.

\begin{equation} \label{err}
E[error(w_t)] = E[ ( \nabla_w f_i(w_t) -  \nabla_w F(w_t) )^2 ]
\end{equation}

We can write expected value of the error for SGD with Equation \ref{sgd_er1} and ESG Equation \ref{esg_er1} using variance bias decomposition. For $A = \nabla_w f_i(w_t)$ and $B = \nabla_w F(w_t)$
\begin{equation} \label{sgd_er1}
E_{sgd}[error(w_t)] = (E_{sgd}[A] - B)^2 + E_{sgd}[A^2] - E_{sgd}^2 [A]
\end{equation}

\begin{equation} \label{esg_er1}
E_{esg}[error(w_t)] = (E_{esg}[A] - B)^2 + E_{esg}[A^2] - E_{esg}^2 [A]
\end{equation}

\begin{theorem}
If the loss values come from a normal distribution ${N}(\mu,\,\sigma^{2})$ at time t, the expected error of SGD will be lower than the ESG error.

$f_i(w_t) \sim {N}(\mu,\,\sigma^{2}) $
$\implies$
$E_{sgd}[error(w_t)] < E_{esg}[error(w_t)]$ .

\end{theorem}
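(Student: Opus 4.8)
The plan is to evaluate the two error decompositions Equation~\ref{sgd_er1} and Equation~\ref{esg_er1} explicitly and to show that SGD and ESG carry the same variance term, while only ESG carries an additional strictly positive bias term. First I would note that the uniform index sampling of SGD makes $A=\nabla_w f_i(w_t)$ an unbiased estimator of $B=\nabla_w F(w_t)$ simply by the definition of $F$, since $E_{sgd}[A]=\frac{1}{N}\sum_{i=1}^{N}\nabla_w f_i(w_t)=\nabla_w F(w_t)=B$. Hence the bias term $(E_{sgd}[A]-B)^2$ in Equation~\ref{sgd_er1} is exactly zero and $E_{sgd}[error(w_t)]$ reduces to the plain variance $\mathrm{Var}_{sgd}(A)$, which under the hypothesis $f_i(w_t)\sim\mathcal{N}(\mu,\sigma^2)$ --- identifying the scalar quantity entering $error$ with the loss value $f_i$, up to an affine change of variable that merely rescales both sides of the inequality --- equals $\sigma^2$.

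For ESG the decisive step is to recognise that reweighting the Gaussian law of $f_i$ by the ESG weight, which is proportional to $\exp(-\lambda f_i)$, and renormalising, is an exponential tilt of a normal law: completing the square in the exponent shows that the tilted law of $f_i$ is $\mathcal{N}(\mu-\lambda\sigma^2,\sigma^2)$, with normalising constant $\sum_{n}\exp(-\lambda f_n)\approx N\exp(-\lambda\mu+\tfrac{1}{2}\lambda^2\sigma^2)$, which is finite by the Gaussian moment generating function. The tilt leaves the variance unchanged at $\sigma^2$ but shifts the mean by $-\lambda\sigma^2$. Therefore in Equation~\ref{esg_er1} the variance term $E_{esg}[A^2]-E_{esg}^2[A]$ is again $\sigma^2$, identical to SGD, whereas the bias term is $(E_{esg}[A]-B)^2=\lambda^2\sigma^4>0$ whenever $\lambda>0$ and $\sigma^2>0$. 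Adding the two contributions gives $E_{esg}[error(w_t)]=\sigma^2+\lambda^2\sigma^4>\sigma^2=E_{sgd}[error(w_t)]$, which is exactly the claim; the only cases of equality are the degenerate ones $\lambda=0$ (ESG collapses to SGD) and $\sigma^2=0$ (all losses equal, so there is nothing to reweight).

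The step I expect to be the main obstacle is the bridge between the hypothesis, which is stated about the loss \emph{values} $f_i(w_t)$, and the quantity that actually appears inside $error(w_t)$, namely the per-sample \emph{gradients} $\nabla_w f_i(w_t)$: one must either posit a monotone (affine) link between the loss and the relevant gradient component, project all quantities onto the direction of $\nabla_w F(w_t)$, or restate the theorem directly in terms of the distribution of $A$. A secondary, more routine point is the large-$N$ approximation implicit in replacing $\frac{1}{N}\sum_i$ and $\sum_n\exp(-\lambda f_n)$ by Gaussian expectations; this is harmless but should be stated, since it is precisely what makes the ``variance unchanged under tilting'' assertion hold on the nose rather than only approximately.
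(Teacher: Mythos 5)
Your proposal is correct and follows essentially the same route as the paper's proof: the same bias--variance decomposition, the same conclusion that SGD is unbiased with error $\sigma^{2}$ while ESG has unchanged variance $\sigma^{2}$ but acquires a bias of $-\lambda\sigma^{2}$, giving $E_{esg}[error(w_t)]=\sigma^{2}+\lambda^{2}\sigma^{4}>\sigma^{2}$. The only difference is presentational --- you obtain $E_{esg}[A]=\mu-\lambda\sigma^{2}$ and $\mathrm{Var}_{esg}(A)=\sigma^{2}$ in one stroke by recognising the reweighting as an exponential tilt of a Gaussian, whereas the paper computes the ratios $E[\lambda f_i e^{-\lambda f_i}]/E[\lambda e^{-\lambda f_i}]$ and $E[\lambda f_i^{2}e^{-\lambda f_i}]/E[\lambda e^{-\lambda f_i}]$ explicitly; and your closing caveat about the silent identification of the loss values $f_i(w_t)$ with the gradient quantities $\nabla_w f_i(w_t)$ points at an imprecision that is present in the paper's own argument as well.
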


\begin{proof}
For $A = \nabla_w f_i(w_t)$  and
$B = \nabla_w F(w_t)$, the expected value of SGD and ESG errors are given in Equation \ref{sgd_er1} and \ref{esg_er1} and can be calculated as follows.

Since $f_i(w_t) \sim N(\mu, \sigma^{2})$ 

\begin{equation} \label{esg_er}
E_{sgd}[A] = \mu
\end{equation}

From definition of $F(w_t)$

\begin{equation} \label{esg_er}
B = \mu
\end{equation}

From the definition of variance

\begin{equation}
E_{sgd}[A^2] - E_{sgd}^{2}[A] = \sigma^2
\end{equation}

Therefore

\begin{flalign} \label{ex_esg_er1}
E_{sgd}[error(w_t)] &= 0 + \sigma^{2} \\ \nonumber
&= \sigma^{2}
\end{flalign}

$E_{esg}[A]$ and $E_{esg}[A^2]$ can be calculated respectively.

\begin{flalign} \label{ex_esg_er}
E_{esg}[A] &= 
\frac{E[\lambda.f_i(w_t).exp(-.\lambda.f_i(w_t))]}{E[\lambda.exp(-\lambda.f_i(w_t))]} \\ \nonumber
&= \frac{\lambda.exp(\frac{\lambda(\lambda.\sigma^{2}-2.\mu)}{2}).(\mu - \lambda.\sigma^{2}) }{\lambda.exp(\frac{\lambda(\lambda.\sigma^{2}-2.\mu)}{2})} \\ \nonumber
&= \mu - \lambda \sigma^{2}
\end{flalign}

\begin{flalign} \label{ex_esg_er}
E_{esg}[A^2] &= 
\frac{E[\lambda.f_{i}^{2}(w_t).exp(-\lambda.f_{i}(w_t))}{E[\lambda.(w_t).exp(-\lambda.f_{i}(w_t))} \\ \nonumber
&= \lambda^{2}.\sigma^{4} - 2.\lambda.\mu.\sigma^{2} + \mu^{2} + \sigma^{2} 
\end{flalign}

\begin{flalign}
E_{esg}[A^2] - E_{esg}^2[A] &= \lambda^{2}.\sigma^{4}-2.\lambda.\mu.\sigma^{2} + \mu^{2} + \sigma^{2}  \\ \nonumber
& - (\mu - \lambda.\sigma^{2})^2  \\ \nonumber
&= \sigma^{2} 
\end{flalign}

Therefore,
\begin{flalign}
E_{esg}[error(w_t)] &= 
(\mu - \lambda.\sigma^{2} - \mu )^2 + \sigma^{2} \\ \nonumber
&= \lambda^{2} \sigma^{4} + \sigma^{2}
\end{flalign}

We calculated the expected value of the SGD error as $E_{sgd}[Error(w_t)] = \sigma^{2}$ and the expected value of the ESG error as $E_{esg}[error(w_t)] = \lambda^{2} \sigma^{4} + \sigma^{2}$. Therefore, we found that the expected value of the SGD error is lower than the expected value of the ESG error.

\begin{flalign}
\sigma^{2}  <  \lambda^{2} \sigma^{4} + \sigma^{2} \implies \nonumber E_{sgd}[Error(w_t)] < E_{esg}[error(w_t)] \\ \nonumber
\end{flalign} 

\end{proof}

\begin{theorem}
If the loss values come from a half-normal distribution at time t and $\sigma\lambda < \pi$ the expected value of ESG error will be lower than the expected value of SGD error.

$f_i(w_t) \sim {HalfNorm}(\mu,\,\sigma^{2})$ $\land$ $\sigma\lambda < \pi$
$\implies$
$E_{esg}[error(w_t)] < E_{sgd}[error(w_t)]$ .

\end{theorem}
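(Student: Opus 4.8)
The plan is to run the same variance--bias decomposition as in Theorem~1, but feeding it the one--sided loss distribution instead of the Gaussian one. As there, write $A = \nabla_w f_i(w_t)$ and $B = \nabla_w F(w_t)$, identify the law of $A$ with the law of $f_i(w_t)$, and take $B = E_{sgd}[A]$, so that in either sampling scheme $E[error(w_t)] = (E[A]-B)^2 + \mathrm{Var}[A]$. For SGD this collapses to $E_{sgd}[error(w_t)] = \mathrm{Var}_{sgd}[A]$, a constant fixed by the parametrisation of $\mathrm{HalfNorm}(\mu,\sigma^2)$ (a location shift $\mu$ passes through harmlessly, as in Theorem~1, so I take the scale--$\sigma$ half--normal, for which this constant is $\sigma^2(1-2/\pi)$); this is the benchmark ESG must undercut.

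\textbf{The tilted law.} ESG samples index $i$ with probability $r_i \propto \lambda e^{-\lambda f_i}$, i.e. it exponentially tilts the half--normal density $g(x)\propto e^{-x^2/(2\sigma^2)}\mathbf{1}_{\{x\ge 0\}}$ by the factor $e^{-\lambda x}$. Completing the square, the tilted density is proportional to $e^{-(x+\lambda\sigma^2)^2/(2\sigma^2)}\mathbf{1}_{\{x\ge 0\}}$: the ESG law of $A$ is $N(-\lambda\sigma^2,\sigma^2)$ truncated to $[0,\infty)$. This structural fact replaces the identity ``tilting a normal returns a normal of the same variance'' used in Theorem~1 --- and crucially, here the truncation strictly shrinks the variance as $\lambda$ grows away from $0$, which is the mechanism behind the improvement.

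\textbf{Moments and comparison.} Inserting this into the standard truncated--normal moment formulas (truncation point $0$, underlying mean $-\lambda\sigma^2$, normalising mass $1-\Phi(\lambda\sigma) = \tfrac12\erfc(\lambda\sigma/\sqrt2)$) yields closed forms for $E_{esg}[A]$, $E_{esg}[A^2]$, and hence $\mathrm{Var}_{esg}[A]$, all in terms of $\erfc$ and the inverse Mills ratio $\phi(\lambda\sigma)/(1-\Phi(\lambda\sigma))$. Then $E_{esg}[error(w_t)] = (E_{esg}[A]-B)^2 + \mathrm{Var}_{esg}[A]$, and the goal is to show $E_{esg}[error(w_t)] - E_{sgd}[error(w_t)] \le 0$ on $\{\sigma\lambda < \pi\}$. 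I would argue this from the two ends of the tradeoff: at $\lambda = 0$ the bias term vanishes and the difference is exactly $0$, while for $\lambda > 0$ the variance reduction dominates the (initially second--order) growth of the bias term $(E_{esg}[A]-B)^2$; after bounding the inverse Mills ratio, e.g. $\alpha < \phi(\alpha)/(1-\Phi(\alpha)) < \alpha + 1/\alpha$ with $\alpha = \lambda\sigma$, the comparison reduces to an elementary inequality in $\alpha$ that one checks holds for $\alpha < \pi$.

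\textbf{Main obstacle.} The first two steps are routine; the real work is the comparison --- taming the $\erfc$ / Mills--ratio terms precisely enough to get a strict inequality and, more delicately, to land on the stated threshold $\sigma\lambda < \pi$ rather than some looser constant. I expect a monotonicity or convexity argument in $\lambda$ (using that the ESG and SGD errors agree at $\lambda = 0$) to be the cleanest route, since it sidesteps exact evaluation of $\erfc$ and isolates exactly where the hypothesis $\sigma\lambda < \pi$ enters; a head--on algebraic comparison of the two closed forms would also work but is messier and obscures the role of $\pi$.
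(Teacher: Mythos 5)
Your computational plan coincides with the paper's up to the last step: the paper also uses the bias--variance decomposition with $B=E_{sgd}[A]=\mu+\sigma\sqrt{2/\pi}$, gets $E_{sgd}[error(w_t)]=\sigma^2(1-2/\pi)$, and derives the same $\erfc$-laden closed forms for the ESG moments (its quantity $C=\frac{\sigma\sqrt{2}\exp(-\sigma^2\lambda^2/2)}{\erfc(\sigma\lambda\sqrt{2}/2)\sqrt{\pi}}$ is exactly $\sigma$ times your inverse Mills ratio $\phi(\lambda\sigma)/(1-\Phi(\lambda\sigma))$, so your identification of the tilted law as $N(-\lambda\sigma^2,\sigma^2)$ truncated to $[0,\infty)$ is a clean structural reading of what the paper computes directly). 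Your observation that the two errors agree at $\lambda=0$ and that the variance reduction wins to first order for small $\lambda$ is also correct.

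The gap is the final comparison, which you defer to "an elementary inequality in $\alpha=\lambda\sigma$ that one checks holds for $\alpha<\pi$." The paper does not close this step analytically either: it states explicitly that because of the $\erfc$ term no analytic equivalent can be written, evaluates $E_{esg}[error]-E_{sgd}[error]$ numerically, and reads the boundary $\sigma\lambda=\pi$ off a plot. Your proposed route is unlikely to fare better as stated. Writing $R(\alpha)$ for the Mills ratio, the difference (in units of $\sigma^2$) is $h(\alpha)=(\sqrt{2/\pi}+\alpha-R(\alpha))^2+2/\pi+R(\alpha)(\alpha-R(\alpha))$; one finds $h(0)=0$, $h'(0)=\sqrt{2/\pi}(1-4/\pi)<0$, and $h(\alpha)\to 4/\pi-1>0$, so a sign change exists, but near $\alpha=\pi$ the value of $h$ is of order $10^{-3}$ --- the inequality is razor-thin there, and the two-sided bound $\alpha<R(\alpha)<\alpha+1/\alpha$ is far too crude to resolve the sign, let alone to produce $\pi$ as the exact crossing point. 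Indeed there is no reason to believe the true zero of $h$ is exactly $\pi$; it appears to be an approximate, numerically observed threshold. So either you must accept a numerical verification (as the paper does), or you should aim for a strictly smaller, provable region (e.g.\ establish $h<0$ for $\alpha$ in some explicit subinterval via sharper Mills-ratio bounds) and state the theorem with that weaker constant. As written, the proposal's "main obstacle" is precisely the part that constitutes the proof, and the sketch for overcoming it would not succeed at the claimed threshold.
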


\begin{proof}

From Equation 5, we can calculate the expected error of SGD as follows;

Since $f_i(w_t) \sim {HalfNorm}(\mu,\,\sigma^{2})$
\begin{flalign}
E_{sgd}[A] = \mu + \sigma \sqrt{\frac{2}{\pi}} \\ \nonumber
B = \mu + \sigma \sqrt{\frac{2}{\pi}}
\end{flalign}

\begin{flalign}
E_{sgd}[A^{2}] - E_{sgd}^2[A] = \sigma^2 (1 - {\frac{2}{\pi}} )
\end{flalign}

Therefore 

\begin{flalign}
E_{sgd}[Error(w_t)] &= 0^2 + \sigma^2 (1 - \frac{2}{\pi}) \\ \nonumber
&= \sigma^2 (1 - \frac{2}{\pi})
\end{flalign}

From equation 6, we can calculate the expected error of ESG as follows;



\begin{flalign}
E_{esg}[A] &= \frac{E[\lambda\nabla_{w}f_i(w_t).\exp(-\lambda\nabla_{w}f_i(w_t)]}{E[\lambda.\exp(-\lambda\nabla_{w}f_i(w_t)]} \\ \nonumber
&= \frac{\lambda(\mu - \sigma^{2} \lambda) (\exp{(\frac{\sigma^2\lambda^2}{2} - \lambda \mu )}) . \erfc{}(\frac{\sigma. \lambda . \sqrt{2}}{2})}{\lambda (\exp{(\frac{\sigma^2\lambda^2}{2} - \lambda \mu )}) \erfc{}(\frac{\sigma. \lambda . \sqrt{2}}{2})} \\ \nonumber
&+ \frac{\frac{\sigma \sqrt{2} \exp(-\mu.\lambda)}{\sqrt{\pi}}}{\lambda (\exp{(\frac{\sigma^2\lambda^2}{2} - \lambda \mu )}) \erfc{}(\frac{\sigma. \lambda . \sqrt{2}}{2})} \\ \nonumber
&= \mu - \sigma^2\lambda + \frac{\sigma\sqrt{2}.\exp{(\frac{-\sigma^2\lambda^2}{2})}}{\erfc{(\frac{\sigma\lambda\sqrt{2}}{2})} \sqrt{\pi}}
\end{flalign}

We can call the last part of this expression as C for ease of representation.

\begin{flalign}
C = \frac{\sigma\sqrt{2}.\exp{(\frac{-\sigma^2\lambda^2}{2})}}{\erfc{(\frac{\sigma\lambda\sqrt{2}}{2})} \sqrt{\pi}}
\end{flalign}

\begin{flalign}
E_{esg}[A^2] &= \frac{E[\lambda\nabla_{w}f_i^2(w_t).\exp(-\lambda\nabla_{w}f_i(w_t)]}{E[\lambda.\exp(-\lambda\nabla_{w}f_i(w_t)]} \\ \nonumber
&= \sigma^4\lambda^2-2\sigma^2\lambda\mu+\mu^2+\sigma^2-C(\sigma^2\lambda-2\mu) \\ \nonumber
\end{flalign}

\begin{flalign}
E_{esg}[A^2] - E_{esg}^2[A] &= \sigma^4\lambda^2-2\sigma^2\lambda\mu+\mu^2+\sigma^2 \\ \nonumber
&-C(\sigma^2\lambda-2\mu)  - (\mu - \sigma^2\lambda+C)^2 \\ \nonumber
&= \sigma^2+C\sigma^2\lambda-C^2
\end{flalign}

\begin{flalign}
E_{esg}[A] - B &= \mu + \sigma\sqrt{\frac{2}{\pi}} - (\mu - \sigma^2 \lambda + C)  \\ \nonumber
&= \sigma\sqrt{\frac{2}{\pi}} + \sigma^2\lambda - C
\end{flalign}

Therefore

\begin{flalign}
E_{esg}[Error(w_t)] &= (\sigma\sqrt{\frac{2}{\pi}}+\sigma^2\lambda-C)^2 + \sigma^2 + C\sigma^2\lambda-C^2 \\ \nonumber
&= (\sigma^2\lambda-C) (\sigma^2\lambda+2\sigma\frac{2}{\pi}) + \sigma^2(\frac{2}{\pi}+1)
\end{flalign}

\begin{figure}[H]
\centering
\includegraphics[scale=0.6]{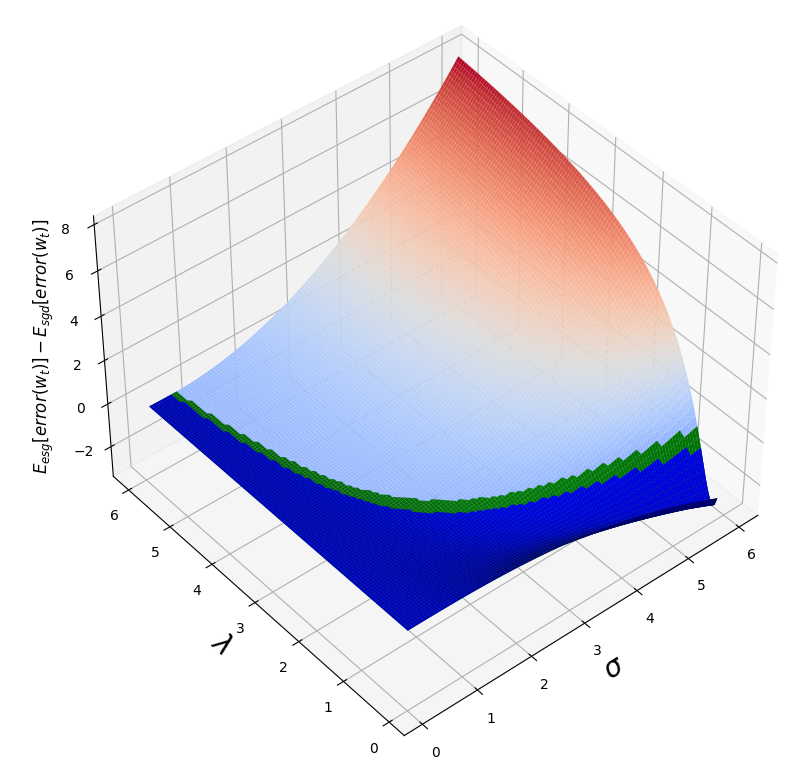}
\caption{$E_{esg}[Error(w_t)] - E_{sgd}[Error(w_t)]$ for $\sigma$ and $\lambda$}
\end{figure}
Since the C term contains the erfc function, its analytic equivalent can not be written. For this reason, we evaluate these expressions numerically. We plotted $E_{esg}[error(w_t)] - E_{sgd}[error(w_t)]$ according to $\sigma$ and $\lambda$ in Figure-4. Figure-4 shows the region where ESG is lower than SGD. The dark-blue and green area shows that the expected value of ESG error is lower than the expected value of SGD error. The lower base of the green area shows the line where $\lambda\sigma=\pi$.




As seen in Figure-4, $\sigma\lambda<=\pi \implies E_{esg}[Error(w_t)] < E_{sgd}[Error(w_t)]$.

\end{proof}

\begin{observation}

In deep learning models, the losses of the samples come from the normal distribution before the training starts. As the training continues, the losses of the samples change to the half-normal distribution.

\end{observation}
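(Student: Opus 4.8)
This statement is empirical rather than exact --- per-sample losses are nonnegative and so cannot be exactly normal --- so the plan is to justify it by a heuristic derivation from the structure of deep networks together with a systematic goodness-of-fit study, and to be explicit that no fully rigorous proof is claimed. I would split the argument into the two regimes named in the statement: initialization (normal) and late training (half-normal).

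For the initialization claim I would use a central-limit-type heuristic. Under standard random initialization each pre-activation feeding the output layer is a sum of many weakly dependent random terms, so the logits of a sample are approximately Gaussian; across different inputs $x_i$ the logit vectors are, to leading order, exchangeable fluctuations driven by the shared random weights. Since $f_i(w_0) = l(y_i, M(x_i, w_0))$ is a smooth function of the logits, a delta-method expansion of $l$ shows that the spread of $\{f_i(w_0)\}_{i=1}^{N}$ is, to first order, a linear image of approximately Gaussian fluctuations, hence approximately $N(\mu,\sigma^2)$. I would state the regime of validity explicitly: this holds while $\mu$ sits several standard deviations above $0$, so that the constraint $f_i \ge 0$ is not yet active.

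For the late-training claim I would use the fact that the optimizer decreases $F(w) = \frac1N \sum_i f_i(w)$ in expectation, driving the bulk of the losses toward the lower bound $0$; once $\mu$ becomes comparable to $\sigma$ the Gaussian model must be folded at $0$. Concretely, I would model the signed residual of a fitted sample (e.g. the margin, or the gap between predicted and target logit) as approximately mean-zero Gaussian, note that the loss behaves locally like a function of the \emph{absolute} residual, and recall that the modulus of a mean-zero Gaussian is exactly half-normal; this yields the transition $N(\mu,\sigma^2) \to \mathrm{HalfNorm}$. I would also point out the internal consistency with Theorems 1 and 2: the early (normal) phase is where SGD beats ESG, and the late (half-normal, $\sigma\lambda < \pi$) phase is where ESG beats SGD --- exactly the alternation that CCL exploits.

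The decisive evidence, in my plan, is empirical: for each of the datasets and architectures of Section 4, plot histograms and Q--Q plots of $\{f_i(w_t)\}$ at $t=0$ and at several later checkpoints, fit a normal and a half-normal by maximum likelihood, and report a goodness-of-fit statistic (Kolmogorov--Smirnov or Anderson--Darling) together with the log-likelihood ratio, showing that the normal fit wins early and the half-normal fit wins late; tracking skewness or $\mu/\sigma$ across epochs visualizes the crossover. The main obstacle is precisely that none of this is a theorem: the central-limit heuristic ignores the dependence among logits induced by weight sharing and the nonlinearity of $l$, and the half-normal claim rests on an idealized mean-zero-residual model that is approximate and architecture- and loss-dependent. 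The honest position is therefore that the two distributional claims are approximations whose scope is delimited by the stated regime conditions and whose support is primarily the empirical study.
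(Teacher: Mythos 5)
The paper does not actually prove this observation: it is presented purely as an empirical claim, supported only by Figure~5, which shows histograms of the per-sample losses at successive stages of training (normal-looking at initialization, half-normal-looking after SGD updates). There is no derivation, no goodness-of-fit analysis, and no statement of regime conditions. Your proposal is therefore strictly more substantive than the paper's own treatment. Your central-limit heuristic for the initialization phase (approximately Gaussian logits under random initialization, pushed through the loss by a delta-method expansion, valid while $\mu$ sits well above the hard floor at $0$) and your folding argument for the late phase (mean-zero Gaussian residuals whose modulus drives the loss, giving a half-normal) supply exactly the mechanism the paper leaves implicit, and your explicit acknowledgement that a nonnegative loss cannot be literally normal addresses a tension the paper never confronts. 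Your proposed empirical protocol (Q--Q plots, KS/Anderson--Darling statistics, likelihood ratios between the normal and half-normal fits across checkpoints) is a systematic version of what Figure~5 does by eye. What the paper's minimal approach buys is brevity and an honest labeling of the claim as an ``Observation'' rather than a theorem; what your approach buys is an actual explanation of \emph{why} the two distributional regimes arise and a falsifiable criterion for when each applies, which matters because Theorems~1--3 lean on this observation as a load-bearing premise. The one caution is that your heuristics remain heuristics, as you say yourself; neither you nor the paper closes the gap to a rigorous statement, and Theorem~3's proof inherits that gap either way.
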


\begin{observation}
During training, if the distributions of training sample's losses are half-normal distribution and the model is optimized with ESG, the training sample's losses changes from half-normal distribution to normal distribution. See Figure-5.
\end{observation}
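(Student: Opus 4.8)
The plan is to treat Observation 2 as the approximate inverse of Observation 1, so that the two together form a cyclic fixed-point picture: uniform (vanilla) updates carry the loss histogram from an approximately normal shape to a half-normal one because every loss decreases but none may drop below its irreducible floor, and ESG updates should carry it back. Accordingly I would organize the argument around the fate of the two ends of the half-normal. Under ESG the selection weight $r_i \propto \exp(-\lambda f_i)$ concentrates almost all updates on the low-loss samples near the floor, which are already close to their minimum and whose gradients are therefore small, while the high-loss samples in the right tail receive essentially no updates. From this I would argue that the mean training loss stops decreasing and in fact drifts upward --- the neglected hard samples do not improve, and the repeated updates on the small easy subset perturb the model's predictions on the rest --- so the bulk of the distribution lifts off the zero-floor.

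The key step is then to argue that once the non-negativity floor is no longer binding, the distortion responsible for the one-sided half-normal in Observation 1 disappears and the histogram relaxes back to the symmetric, approximately normal shape it had before training. For this I would reuse, in reverse, exactly the heuristic behind Observation 1: absent a boundary constraint each per-sample loss is an aggregate of many roughly independent contributions, so a central-limit / maximum-entropy argument makes the normal shape the natural one, which is precisely why the losses are normal at initialization; removing the binding constraint returns the system to that regime.

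Since there is no closed form for the training dynamics, I would confirm the claim numerically, which is the role of Figure-5: maintain a population of per-sample loss values, evolve them by ESG-weighted stochastic decrements with a non-negativity clip and a small coupling term modeling interference between samples, and exhibit the transition from half-normal toward normal; rerunning the same toy model with uniform selection should reproduce the normal $\to$ half-normal transition of Observation 1, closing the loop. I would close by recording the payoff: together with Theorems 1 and 2, Observations 1 and 2 say the loss distribution oscillates between the normal regime, where vanilla has the smaller expected gradient error, and the half-normal regime, where ESG does, so a schedule that likewise oscillates between vanilla and curriculum --- that is, CCL --- is, phase by phase, using whichever rule is locally the better one.

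I expect the main obstacle to be precisely the middle step: justifying that the relaxed, off-floor distribution is specifically \emph{normal} rather than merely \emph{less skewed} or some other unimodal shape. The honest resolution is that this is an \emph{observation}, not a theorem: the argument is necessarily a heuristic about nonlinear optimization backed by the simulation in Figure-5, and I would present it as such, leaning on the symmetry of the unconstrained regime and the observed initialization behavior rather than claiming a rigorous derivation.
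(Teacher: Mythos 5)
Your proposal is correct in spirit, but it supplies considerably more argument than the paper itself does. The paper offers no proof of Observation 2 at all: the claim is presented purely as an empirical observation, and the only support is the prose surrounding Figure-5, which describes histograms of per-sample losses taken from actual training runs --- normal at initialization, half-normal after SGD has been applied, remaining half-normal if SGD continues, and returning toward normal once ESG is applied. There is no mechanistic explanation, no toy model, and no central-limit or maximum-entropy appeal. Your reading of the observation's role in the larger argument (normal regime $\rightarrow$ SGD is better by Theorem 1, half-normal regime $\rightarrow$ ESG is better by Theorem 2, and the two observations close the cycle that Theorem 3 then exploits) matches the paper exactly, and your closing stance --- that this is an observation backed by a figure rather than a theorem --- is precisely the paper's own. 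What your route buys is an actual causal story (the non-negativity floor binding and then releasing, followed by symmetrization) and a reproducible simulation protocol, neither of which the paper provides; what the paper's route buys is only that it asserts nothing it does not show in a histogram.

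One caveat on your mechanism: the intermediate step that under ESG ``the mean training loss stops decreasing and in fact drifts upward'' because hard samples are starved of updates sits awkwardly with the paper's framing, in which ESG is advocated as the \emph{lower-error} estimator of the full gradient in the half-normal regime, i.e., the better optimizer there, not a procedure that degrades the fit. The paper never commits to any mechanism, so nothing in it contradicts you, but you should present that step as your own conjecture; and the link you yourself flag as weakest --- that the relaxed distribution is specifically normal rather than merely less skewed --- is exactly the point the paper settles only by pointing at Figure-5.
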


Figure-5 shows the variation of the losses of the samples during the training of the deep learning model. 
In the 1st graph, while the training has not started, the losses are distributed according to the normal distribution. 
In this case, after SGD is applied to the losses, the 2nd graph is formed. 
In the 2nd graph, the losses turn into a half normal distribution. 
If SGD is continued to be applied at this stage, the losses will remain in the half-normal distribution. 
However, at this stage, if ESG, which makes less error than SGD, is applied to the losses distributed with the half-normal distribution, the loss values of the samples will again resemble the normal distribution. 
In the CCL algorithm, this cyclical process continues throughout the training.

\begin{theorem}
The expected value of the error of  applying SGD and ESG  cyclically (CCL algorithm) is smaller than SGD.
\begin{equation}
    MSE(CCL) < MSE(SGD)
\end{equation}
\end{theorem}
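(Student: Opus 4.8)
The plan is to study the mean-squared gradient error accumulated over one full cycle of the CCL schedule and compare it, epoch by epoch, against the error that plain SGD would incur over the same epochs. By Algorithm~1 a cycle decomposes into epochs with $S[t]=1$, on which CCL samples the whole training set and therefore performs exactly an SGD update, and epochs with $S[t]<1$, on which CCL samples according to the inverse-loss scores and therefore performs an ESG update in the sense of Equation~\ref{esg}. So within a cycle CCL is a concatenation of an SGD sub-phase and an ESG sub-phase, and I would bound the two sub-phases' contributions to $\mathrm{MSE}(\mathrm{CCL})$ separately and then add them.

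On the SGD sub-phase the CCL run and the reference SGD run are identical, so their per-epoch error contributions coincide and nothing is won or lost there. The content is in the ESG sub-phase, and the structural input that makes it work is Observations~1 and~2: plain SGD drives an initially normal loss distribution toward a half-normal one and then leaves it half-normal, while an ESG step applied to a half-normal distribution pushes it back toward normal. Using Observation~1, I would argue that at the epoch where CCL switches from its $S[t]=1$ epochs to its $S[t]<1$ epochs the loss distribution of \emph{both} runs is (approximately) half-normal; then, provided the cycle is calibrated so that the relevant scale obeys $\sigma\lambda<\pi$, Theorem~2 gives $E_{esg}[\text{error}(w_t)]<E_{sgd}[\text{error}(w_t)]$ on that half-normal distribution. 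Thus the ESG epochs of CCL are strictly less erroneous than the corresponding SGD epochs of the reference run, which by Observation~2 stays stuck in the half-normal regime. Adding the identical SGD-sub-phase contribution to the strictly smaller ESG-sub-phase contribution gives $\mathrm{MSE}(\mathrm{CCL})<\mathrm{MSE}(\mathrm{SGD})$ over the cycle, hence over the whole training run by summing over cycles.

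The main obstacle --- and the step where the argument is necessarily semi-heuristic rather than fully rigorous --- is controlling the loss distribution \emph{throughout} the ESG sub-phase rather than only at its first epoch. After one ESG step the CCL distribution begins drifting from half-normal back toward normal (Observation~2), and on a genuinely normal distribution Theorem~1 says SGD, not ESG, is the better choice; so one must show that over the whole sub-phase the distribution stays in the intermediate regime where the sign of $E_{esg}-E_{sgd}$ remains favourable, which needs a quantitative version of Observations~1--2 (for instance a two-state model for how quickly the distribution interpolates between normal and half-normal, with the cycle length tuned accordingly) that the paper supports only empirically. A secondary technical point is ensuring the scale parameters $\sigma$ and $\mu$ are comparable between the CCL and SGD trajectories at matched epochs, so that the per-epoch comparisons borrowed from Theorems~1--2 are genuinely apples-to-apples; I would handle this by noting that both runs start from the same weights and minimize the same objective, and by restricting the comparison to the first cycle or arguing inductively that CCL never falls behind.
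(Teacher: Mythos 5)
Your proposal follows essentially the same route as the paper: decompose training into the $S[t]=1$ epochs (where CCL coincides with SGD on normally distributed losses, invoking Theorem~1 and Observation~1) and the $S[t]<1$ epochs (where CCL applies ESG to half-normally distributed losses and Theorem~2 gives a strictly smaller per-step error than SGD), then sum the contributions. The caveats you flag --- controlling the loss distribution throughout the ESG sub-phase rather than only at its first step, the comparability of $\sigma$ and $\mu$ across the two trajectories, and the need for the $\sigma\lambda<\pi$ condition --- are genuine gaps that the paper's own proof leaves unaddressed, so your argument is, if anything, the more candid version of the same heuristic.
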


\begin{proof}

From observation-1, the losses come from the normal distribution before the training starts.

From observation-2, if the losses come from the normal distribution, when SGD is applied, the losses are distributed according to the half-normal distribution, and when the losses are distributed according to the half normal distribution, when ESG is applied, the losses turn into the normal distribution.

According to Theorem-1, if the losses come from a normal distribution, the expected value of the error of applying SGD is lower than ESG.

According to Theorem-2, if the losses come from a half normal distribution, the expected value of the error of applying ESG is lower than SGD.

For $S(t) = 1$ CCL uses SGD when losses are normally distributed, and for $S(t) < 1$ CCL uses ESG when losses are half normally distributed. Because of this, CCL makes fewer errors than SGD.
\begin{equation}
    MSE(CCL) < MSE(SGD)
\end{equation}

\end{proof}

\begin{figure*}[]\label{region}
\includegraphics[width=\textwidth]{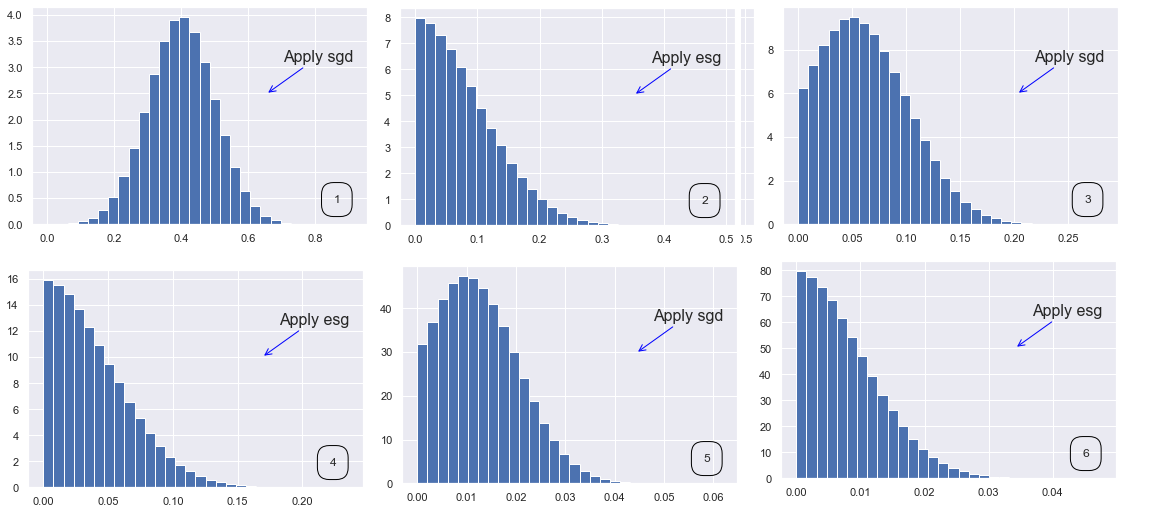}
\caption{Distribution of loss values according to different selection possibilities}
\end{figure*}

\begin{theorem}

If $r_i = \frac{1}{f_i\sum{\frac{1}{f_i}}}$ is defined for ESG, the expected value of CCL error is lower than SGD.

\end{theorem}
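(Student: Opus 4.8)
The plan is to run the same three-part argument that proves Theorem-3, but with the exponential weighting $r_i \propto \lambda e^{-\lambda f_i}$ replaced everywhere by the inverse-loss weighting $r_i \propto 1/f_i$, and then to recombine the pieces through Observation-1 and Observation-2. So the first task is to produce the analogue of Theorem-1 for the new scheme. Assuming $f_i(w_t)\sim N(\mu,\sigma^2)$, I would recompute $E_{esg}[A]$ and $E_{esg}[A^2]$ with $A=\nabla_w f_i(w_t)$ under the density proportional to $1/f_i$. The technical device is the identity $1/f=\int_0^\infty e^{-ft}\,dt$, which writes the inverse-loss weight as a continuous mixture of the exponential weights already handled in Theorem-1; each moment integral then collapses to the one-parameter family of Gaussian integrals computed there, integrated over $t$. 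As in Theorem-1, the result is a strictly positive squared-bias term $(E_{esg}[A]-B)^2>0$ together with a variance term, giving $E_{sgd}[error(w_t)]=\sigma^2 < E_{esg}[error(w_t)]$.

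Next I would establish the analogue of Theorem-2. With $f_i(w_t)\sim \mathrm{HalfNorm}(\mu,\sigma^2)$ I repeat the same moment computation. The role played by the hypothesis $\sigma\lambda<\pi$ in Theorem-2 should be taken over by a parameter region — again cut out by erfc-type terms that resist a closed form — in which the bias introduced by the $1/f_i$ weighting is small enough that the decrease in variance dominates. I would isolate that region and, as with Figure-4, exhibit it numerically, concluding $E_{esg}[error(w_t)] < E_{sgd}[error(w_t)]$ there.

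To assemble the cycle: by Observation-1 the losses are normally distributed at initialization, and by Observation-2 each SGD phase drives them toward a half-normal shape while each ESG phase pulls them back toward normal — and I would note that the fixed-point reasoning of Observation-2 uses only that the weighting up-weights small losses, so it transfers verbatim to the $1/f_i$ weighting. Hence along a CCL run the steps with $S(t)=1$ act on (approximately) normal losses, where the previous paragraph's normal-regime bound shows SGD is the lower-error choice and CCL indeed uses SGD, while the steps with $S(t)<1$ act on (approximately) half-normal losses inside the admissible region, where the half-normal-regime bound shows the $1/f_i$-ESG is the lower-error choice and CCL uses exactly that. Summing the per-step mean-squared errors over one cycle then yields $MSE(CCL)<MSE(SGD)$, precisely as in Theorem-3.

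The main obstacle I anticipate is controlling those moment integrals for the $1/f_i$ weight: unlike $e^{-\lambda f_i}$, the factor $1/f_i$ has a non-integrable singularity at $f_i=0$, so $E_{esg}[A]$ and $E_{esg}[A^2]$ are only well defined once one accounts for realized losses being bounded away from $0$ — equivalently, works with a truncated density, a principal-value reading, or the mixture representation above, which regularizes the singularity. Making that regularization precise, and then checking that the resulting squared-bias term is genuinely nonzero in the normal regime and lands in the favorable region in the half-normal regime, is where the real work sits; the remainder is bookkeeping identical to the proof of Theorem-3.
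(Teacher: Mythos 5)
Your plan diverges substantially from the paper's proof, and in its current form it contains a genuine unfilled gap. The paper does not recompute any moments under the inverse-loss weighting. Instead it makes a two-line reduction to Theorem 2: setting $\exp(-\lambda x) = 1/x$ pointwise gives $\lambda(x) = \ln(x)/x$, whose maximum over $x$ is $1/e$ (attained at $x=e$); substituting this maximal $\lambda$ into Theorem 2's sufficient condition $\sigma\lambda < \pi$ yields $\sigma < \pi e$, and the paper then notes that the loss variances of all datasets in the experiments satisfy this bound. Whatever one thinks of treating a $\lambda$ that varies with $x$ as if it were the fixed parameter of the exponential family, that is the entirety of the paper's argument; no new moment integrals are evaluated.

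Your route --- redoing the Theorem 1 and Theorem 2 computations with $r_i \propto 1/f_i$ and reassembling the cycle via Observations 1 and 2 --- is a reasonable program, but the step you yourself defer as ``the real work'' is exactly where it fails. For half-normal losses the density is bounded away from zero at $f=0$, so the normalizer $E[1/f]$ diverges logarithmically; for normal losses $f$ takes negative values, so $1/f$ is not even a nonnegative weight. Without a concrete truncation the quantities $E_{esg}[A]$ and $E_{esg}[A^2]$ you propose to compute are undefined, and the choice of cutoff would enter the final condition, so it cannot be waved away. The mixture identity $1/f = \int_0^\infty e^{-ft}\,dt$ also does not let you reuse the earlier results directly: the ESG expectations are ratios of the form $E[A/f]/E[1/f]$, and a ratio of $t$-integrals is not an integral of the ratios computed in Theorems 1 and 2, so the claimed ``collapse'' to the one-parameter Gaussian family does not occur. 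Finally, you never actually exhibit the admissible region in the half-normal regime --- the analogue of the paper's $\sigma < \pi e$ --- so the conclusion is not reached. If you want to salvage the argument quickly, adopt the paper's device: bound the effective $\lambda$ of the $1/x$ weight by $1/e$ and invoke the region already established in Theorem 2.
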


\begin{proof}

When we solve Equation-26 $x = f_i$ for $\lambda$, we get $\lambda = \frac{ln(x)}{x}$. This function's maximum value is $\frac{1}{e}$ for $x = e$.

\begin{equation}
exp(-\lambda.x) = \frac{1}{x}
\end{equation}

From teorem-2 ESG makes less error than SGD where $\lambda\sigma<\pi$
When we replace lambda with its maximum value we get

\begin{flalign}
\frac{1}{e}\sigma<\pi \\ \nonumber
\sigma<\pi.e
\end{flalign}

For values of $\sigma<\pi.e$, using $\frac{1}{x}$ instead of $exp(-\lambda.x)$ the expected value of CCL error is lower than SGD.

\end{proof}

The variance of the losses of all data sets used in the experiments is lower than the $\pi.e$ constant. Formally 

\begin{flalign}
\sigma<\pi.e
\end{flalign}

The variances of the losses obtained by the loss functions used in deep learning architectures comply with Equation 28. For this reason, it is also appropriate to use $\frac{1}{x}$ instead of  $exp(-\lambda.x)$ in CCL.

\section{Conclusions}
In this study, we were inspired by the earlier curriculum methods that the size of the training data set starts from small and gradually grows or starts with the whole training data set and decreases gradually; we examined the cyclical change of the training data set size.

We proposed an algorithm for the speed of this cycle change, start and end percentages. We chose the samples to be selected for the subset as the probabilistic ratio based on predetermined scores using the model itself. Then we train artificial neural networks using these values.

We tested this method on four image classification and fourteen text classification data sets. In the test sets, in ten out of eighteen data sets, we obtained statistically better accuracy values than the vanilla method. With the same cycling hyperparameters, success has been achieved in different types of datasets and architectures. However, even higher successes can be achieved with hyperparameters specific to the dataset/architecture.

We also give a theoretical explanation of why the CCL algorithm performs better. In some studies reported that CL methods could not outperform the vanilla method, and some reported that more successful results were obtained with the CL method. As shown in the theoretical analysis, under some conditions the vanilla method makes fewer mistakes, while under some conditions the curriculum methods make fewer mistakes. This situation brings an explanation to the literature. CCL on the other hand, makes fewer mistakes than the existing curriculum and vanilla methods because it uses the SGD algorithm when SGD makes fewer mistakes, and the ESG algorithm when ESG makes fewer mistakes.
CCL is more successful than both SGD and other CL methods because it uses the appropriate algorithm for the situation. 

The CCL algorithm predetermines the required dataset sizes in training and uses these dataset sizes throughout the training. However, during the training, dynamic size can be determined according to the current loss distribution instead of the determined sizes. This can lead to better optimization. Some studies suggest that even if CL does not increase final accuracy, it accelerates training at the beginning of training. Some studies, on the other hand, carry out updates by prioritizing more difficult examples towards the end of the training. With these strategies in mind, different strategies can be used for CL at the beginning, middle, and end of training.

\section*{Acknowledgment}
This study was supported by the Scientific and Technological Research Council of Turkey (TUBITAK) Grant No: 120E100.



\bibliography{citation}
\bibliographystyle{IEEEtran}

\end{document}